\documentclass{article}
\pdfpagewidth=8.5in
\pdfpageheight=11in

\usepackage{kr}

\usepackage{pdfpages}
\usepackage{times}
\usepackage{soul}
\usepackage{url}
\usepackage[hidelinks]{hyperref}
\usepackage[utf8]{inputenc}
\usepackage[small]{caption}
\usepackage{graphicx}
\usepackage{amsmath}
\usepackage{amsthm}
\usepackage{booktabs}
\usepackage{algorithm}
\usepackage{algorithmic}
\urlstyle{same}
\usepackage{booktabs}       %
\usepackage{amsfonts}       %
\usepackage{nicefrac}       %
\usepackage{microtype}      %
\urlstyle{same}

\usepackage{xspace}         %

\usepackage[table]{xcolor}
\usepackage{wrapfig}
\usepackage{tabularx}

\usepackage{etoolbox}    %

\newcommand{\go}{\textsc{GbO}\xspace}
\newcommand{\godel}{G\"odel\xspace}
\newcommand{\lukasiewicz}{Łukasiewicz\xspace}
\newcommand{\gt}{\textsc{GT}\xspace}
\newcommand{\bpg}{\textsc{GbOG}\xspace}
\newcommand{\ii}{implicit interpretation\xspace}

\newcommand{\godelint}{\mathcal{G}}
\newcommand{\boolint}{\mathcal{B}}
\newcommand{\propositions}{P}

\newcommand{\bgamma}{\boldsymbol{\gamma}}
\newcommand{\bmu}{\boldsymbol{\mu}}

\newtheoremstyle{compact}
  {1pt}   %
  {1pt}   %
  {\itshape}  %
  {}      %
  {\bfseries} %
  {.}     %
  { }     %
  {}      %

\theoremstyle{compact}
\newtheorem{theorem}{Theorem}
\theoremstyle{compact}
\newtheorem{corollary}{Corollary}[theorem]
\newtheorem{remark}{Remark}
\newtheorem{lemma}{Lemma}
\newtheorem{proposition}{Proposition}%
\newtheorem{definition}{Definition}

\AtBeginEnvironment{proof}{\vspace{-9pt}}
\AtEndEnvironment{proof}{\vspace{-7pt}}

\title{Gradient-Based Optimization on Gödel Logic as Discrete Local Search}

\author{
Alessandro Daniele$^{1,2}$\and
Emile van Krieken$^3$ \\
\affiliations
$^1$University of Bozen-Bolzano, Bozen, Italy\\
$^2$Fondazione Bruno Kessler, Trento, Italy\\
$^3$Vrije Universiteit Amsterdam, Amsterdam, Netherlands \\
\emails
alessandro.daniele@unibz.it,
e.van.krieken@vu.nl
}

\begin{document}

\maketitle

\begin{abstract}

A fundamental challenge in neurosymbolic systems is applying continuous gradient-based optimization to discrete logical domains. While fuzzy relaxations provide differentiability, they often lack a formal structural alignment with classical logic. In this work, we show that Gödel semantics addresses this limitation through a homomorphism that maps its continuous interpretations to Boolean ones, allowing discrete variables to be encoded while maintaining full differentiability. Building on this foundation, we show that gradient-based optimization on Gödel logic instantiates a discrete local search for Boolean satisfiability. Our formal analysis proves that each optimization step identifies and modifies a single variable within an unsatisfied clause, precisely mimicking the steps of a discrete solver. We identify local optima as the primary limitation of such dynamics and introduce the Gödel Trick, a stochastic reparameterization technique designed to improve the exploration of the solution space. We further show a formal connection between this approach, probabilistic inference, and the Gumbel-Max trick. Experimental results on SAT benchmarks and the Visual Sudoku task validate our theoretical findings, demonstrating that our approach effectively navigates complex combinatorial landscapes and provides a solid foundation for differentiable discrete search\footnote{Code available at: \url{https://github.com/DanieleAlessandro/Godel-Trick}}.
\end{abstract}

\begin{figure*}
    \centering
    \includegraphics[width=1.0\linewidth]{teaser.png}
    \caption{
    \textbf{The starting point of our work: there is a consistent mapping from \godel to boolean interpretations.}
    Example for formula $\varphi = (A \lor B) \land \lnot B$. (a) \godel interpretation (active path highlighted in red); (b) the corresponding boolean interpretation;
    (c-d) plot of \godel interpretation for $\varphi$ (red) and the plane $z=0$ (gray): (c) side view, (d) top view;
    (e) truth table of $\varphi$ in boolean logic. There is a one-to-one correspondence between the quadrants in (d) and the discrete values in (e).}
    \label{fig:virtual_imp}
\end{figure*}

\section{Introduction}

Deep learning has revolutionized artificial intelligence, driven by the power of Gradient-Based Optimization (\go) and its efficient implementation, that is the backpropagation~\cite{rumelhart1986learning}. While \go excels in continuous domains, its integration into neurosymbolic (NeSy) systems remains a significant challenge due to the inherently discrete and combinatorial nature of symbolic reasoning~\cite{feldstein2024mappingneurosymbolicailandscape,MARRA2024104062}. The difficulty does not lie merely in rendering logical constraints differentiable, but in doing so without sacrificing the structural rigor of classical logic.

A common strategy in the NeSy field is to employ fuzzy logics as continuous relaxations of Boolean operators~\cite{van2022analyzing,badreddine2022logic,daniele2019knowledge}. By allowing truth values to range across the unit interval $[0, 1]$, these ``soft'' logics provide a differentiable landscape suitable for \go. However, this transition frequently leads to a semantic mismatch: as no fuzzy logic can satisfy all properties of classical logic~\cite{gupta1991theory}, the resulting continuous approximations often diverge from the intended logical behavior, failing to capture the discrete essence of symbolic tasks~\cite{van2022analyzing}.

In this work, we demonstrate that \godel logic is a remarkable exception to this trend. Rather than acting as a mere ``soft'' surrogate, we prove that \godel semantics possesses unique algebraic properties that establish a structural bridge to Boolean logic. Specifically, we identify a homomorphism between \godel and Boolean semantics that allows for a consistent mapping between the continuous and discrete worlds. Our central argument is that \godel logic is ``discrete in disguise'': we prove that its truth function produces sparse gradients, such that each optimization step identifies and modifies only a single variable and, when applied to a Conjunctive Normal Form (CNF) formula, it identifies and modifies only a single unsatisfied clause. This reveals that Gradient-based Optimization on \godel logic (\bpg) does not simply approximate logic; it formally instantiates a discrete local search for Boolean satisfiability.

By characterizing \bpg as a discrete Local Search Algorithm (LSA) for SAT, we can identify its primary theoretical limitation: similar to deterministic solvers like GSAT, it is prone to converging to local optima. To overcome this, we introduce the \textit{\godel Trick} (GT), a stochastic reparameterization technique that introduces noise into the optimization process to improve exploration. We show that the \godel Trick is not merely a heuristic addition of noise, but provides a formal probabilistic grounding, acting as a Monte Carlo estimator for Weighted Model Counting (WMC). 

Our main theoretical contributions are threefold: (i) we prove the existence of a homomorphism between \godel and classical logic semantics; (ii) we provide a formal proof of the sparsity of the \godel gradient, demonstrating that gradient-based optimization on \godel logic circuits behaves as a LSA; (iii) we identify local optima as the fundamental barrier to convergence in this framework and introduce the \godel Trick (\gt), a stochastic variant of \godel optimization. Moreover, we establish its formal connection to probabilistic inference, showing it serves as an efficient Monte Carlo estimator for probabilistic interpretations and connecting it to the Gumbel-Max trick~~\cite{gumbel1954statistical,jang2022categorical}.

The remainder of the paper validates these theoretical findings through evaluation of \gt on SAT benchmarks~\cite{hoos2000satlib} and the Visual Sudoku task~\cite{augustine2022visual}.

\section{Background and notation}
\label{sec:background}

Propositional formulas are defined recursively as:
$
\varphi ::= p \mid \neg \varphi_1 \mid (\varphi_1 \land \varphi_2) \mid (\varphi_1 \lor \varphi_2),
$
where \( p \in \propositions \) is an atomic proposition, and \( \neg \), \( \land \), \( \lor \) denote negation, conjunction, and disjunction.

We consider two semantics: classical (Boolean) logic and \godel logic. In classical logic, formulas are assigned binary truth values in $\{-1, 1\}$~\footnote{Without loss of generality, we use $1$ and $-1$ as \textit{true} and \textit{false}, respectively}. \godel logic generalises classical logic by allowing truth values to be real numbers in the range $[0, 1]$, where $0$ represents false, $1$ represents true. In the remainder, we adopt truth values of formulas over the real number space $\mathbb{R}$ instead of 
$[0,1]$~\footnote{We consider logits in $\mathbb{R}$, that is, the outputs of a neural network before applying the sigmoid function. More details in the Supplementary Materials.}.

An interpretation is a function that maps formulas in truth values and is defined recursively as:

\begin{center}
    \renewcommand{\arraystretch}{1.2}
    \begin{tabular}{rcc}
    \hline
    \textbf{Formula} & \textbf{Boolean} $\boolint(\varphi)$ & \textbf{\godel} $\godelint(\varphi)$ \\
    \hline
    $p_i$ & $ \in \{-1, 1\}$ & $ \in \mathbb{R}$ \\
    $\lnot \varphi$ & $-\boolint(\varphi)$ & $-\godelint(\varphi)$ \\
    $\varphi_1 \land \varphi_2$ & $\min(\boolint(\varphi_1), \boolint(\varphi_2))$ & $\min(\godelint(\varphi_1), \godelint(\varphi_2))$ \\
    $\varphi_1 \lor \varphi_2$ & $\max(\boolint(\varphi_1), \boolint(\varphi_2))$ & $\max(\godelint(\varphi_1), \godelint(\varphi_2))$ \\
    \hline
    \end{tabular}
    \end{center}

Note that the interpretation of any formula is recursively determined by the interpretation of atomic propositions. In the following, we sometimes represent an interpretation compactly using a vector of truth values over the atomic propositions: \mbox{$\bmu = \langle \godelint(p_i) \rangle_{i=1}^n$},
and write $\godelint(\varphi; \bmu)$ to denote the value of formula \(\varphi\), assuming $\godelint(p_i) = \mu_i$. This notation is particularly useful for Theorem~\ref{th:prob_inf}, as it allows us to treat the interpretation as a fixed function on vector \(\bmu\). 
Similarly, we use \mbox{$\bgamma = \langle \boolint(p_i) \rangle_{i=1}^n$} and $\boolint(\varphi; \bgamma)$ for the boolean interpretation.

\section{Related Work}
\label{sec:related}

The neurosymbolic (NeSy) field focuses on incorporating logical knowledge into learning systems \cite{selsam2018learning,li2022nsnet,wang2019satnet}.

A prominent research direction investigates the continuous relaxation of logical constraints by adopting t-norm based fuzzy semantics. This framework enables the definition of differentiable logic layers, where logical formulas are translated into continuous functions suitable for gradient-based optimization. 
Logic Tensor Networks (LTN)~\cite{badreddine2022logic} and Semantic-Based Regularization (SBR)~\cite{diligenti2017semantic} inject prior knowledge into neural networks by embedding fuzzy logic in loss functions. 
These methods allow for the choice of different fuzzy semantics that have been extensively analyzed and compared in terms of expressiveness and optimisation~\cite{van2022analyzing,grespan2021evaluating,flinkow2024comparing,slusarz2023logic}. Furthermore, fuzzy logic operators can be used as part of the neural network architecture itself to incorporate background knowledge \cite{giunchiglia2024ccn+,daniele2019knowledge,daniele2023refining}.

\godel logic is frequently included as a standard semantic choice in various neurosymbolic frameworks~\cite{badreddine2022logic,daniele2023refining}, and as the only choice in other works~\cite{daniele2019knowledge,andreoni2025t}. However, in these contexts, it is primarily utilized as a continuous surrogate for Boolean constraints. Our contribution offers a novel perspective: by leveraging its specific algebraic properties, namely the existence of a homomorphism to Boolean logic, we show that gradient-based optimization on \godel semantics can be formally interpreted as a discrete local search. Our analysis of this structural alignment shifts the focus from treating \godel logic merely as a continuous relaxation to its specific optimization challenges, such as convergence to local optima. These issues are subsequently addressed by our proposed \godel Trick.

A separate but influential line of research deals with the probabilistic paradigm. 
NeSy methods that use probabilistic logics, such as Semantic Loss \cite{xu2018semantic}, DeepProbLog \cite{manhaeve2018deepproblog} and Semantic Probabilistic Layers \cite{ahmed2022semantic}, require computing the Weighted Model Counting (WMC) \cite{chavira2008probabilistic}, a \#P-hard problem \cite{valiant1979complexity}, at each iteration. 
To overcome this, methods either use compiled probabilistic circuits \cite{choi2020probabilistic,kisa2014probabilistic} or approximations, for example via neural networks \cite{van2023nesi} or by sampling \cite{NEURIPS2023_61202bb3}. While we do not perform a direct empirical comparison with probabilistic methods, as our primary focus is on providing a formal foundation and theoretical characterization of optimization within \godel logic, in Section~\ref{sec:prob_inference} we bridge the fuzzy and probabilistic paradigms by showing that the \godel Trick can be interpreted as a Monte Carlo estimator for WMC.

\section{Homomorphism from \godel to Boolean algebraic structures}
\label{sec:hom}

In this section, we demonstrate the existance of an homomorphism that maps continuous interpretations of \godel logic to discrete boolean interpretations of classical logic.
In the case of the \godel logic, we consider $\mathbb{R} \setminus \{0\}$, since this relation can only be proved when we exclude zero~\footnote{In practice, this is not an issue since we introduce noise during training, making the probability of $\godelint(\varphi)=0$ negligible (see Section~\ref{sec:godel-trick}).}.

Two algebraic structures can be defined: for clasical logic we have $\mathcal{L}_{\mathcal{B}} = \langle \{-1, 1\}, \min, \max, - \rangle$;  while for \godel semantics we define $\mathcal{L}_{\mathcal{G}} = \langle \mathbb{R} \setminus \{ 0 \}, \min, \max, - \rangle$. Both structures are De Morgan Lattice~\cite{moisil1935recherches} (also known as distributive i-lattices~\cite{kalman1958lattices}).

The sign function $s: \mathbb{R} \setminus \{0\} \to \{-1, 1\}$ is defined as:
$$
s(x) = \frac{x}{|x|} =
\begin{cases}
-1, & \text{if } x < 0, \\
\ \ \ 1, & \text{if } x > 0.
\end{cases}
$$

\begin{proposition}
\label{th:homomorphism}
The sign function $s(x)$ is a homomorphism from the \godel lattice $\mathcal{L}_{\mathcal{G}}$ to the boolean lattice $\mathcal{L}_{\mathcal{B}}$.
\end{proposition}

\begin{proof}
    We verify that $s$ preserves the lattice structure: 
    \paragraph{Negation:} Let $x \in \mathbb{R} \setminus \{ 0 \}$. Then, we have 
    $$s(-x) = -x/| -x | = -(x/|x|) = -s(x)$$ preserving the negation. 
    Note that this would not hold if we assumed zero to be a valid truth value. 
    \paragraph{Conjunction:} Let $x, y \in \mathbb{R} \setminus \{ 0 \}$. Then, 
    $$s(\min(x, y)) = \min(s(x), s(y))$$ preserving the conjunction. This holds because the minimum of two inputs will be negative iff at least one input is negative.
    \paragraph{Disjunction:}  Let $x, y \in \mathbb{R} \setminus \{ 0 \}$. Then,  $$s(\max(x, y)) = \max(s(x), s(y))$$ preserving the disjunction. This holds because the maximum is positive iff at least one value is positive. 
    
\end{proof}
As a consequence, the \godel interpretation of any formula can be mapped to a Boolean interpretation via $s$.

The relation defined by Proposition~\ref{th:homomorphism} plays a crucial role in enabling Gradient-Based Optimization (\go) to optimise logical formulas in a differentiable setting.
At each step of \go on $\godelint(\varphi)$, the current \godel interpretation can be mapped to a corresponding discrete interpretation via the sign function. We call this discrete interpretation the \textit{\ii}, defined as $\boolint(\varphi) = s(\godelint(\varphi))$.
Figure~\ref{fig:virtual_imp} shows an example: the implicit interpretation of formula $\varphi$ can be computed either by discretizing (via the sign function) each proposition and interpreting $\varphi$ with boolean semantics, or by interpreting it in \godel semantics and discretizing the final output.

\section{Gradient-based Optimization on \godel logic}
\label{sec:BP_LSA}

\begin{figure*}
    \includegraphics[width=\linewidth]{BPvsLSA.png}
    \caption{Gradient-based Optimization on \godel logic (\bpg) acts as a local search algorithm for SAT.}
    \label{fig:BP_vs_LSA}
\end{figure*}

In this section, we analyze the behaviour of Gradient-Based Optimization applied to \godel logic (\bpg), showing its correspondence with Local Search Algorithms (LSAs). Since the aim is to increase the value of the \godel interpretation, gradient ascent is applied on $\godelint(\varphi)$. As illustrated on the left side of Figure~\ref{fig:BP_vs_LSA}, the algorithm iteratively adjusts the continuous truth values according to the \emph{update rule} ${\boldsymbol{\mu} = \boldsymbol{\mu} + \lambda \nabla_{\boldsymbol{\mu}}\godelint(\varphi; \boldsymbol{\mu})}$, where $\lambda$ denotes the learning rate. Note that gradient ascent often updates the continuous interpretation without changing the sign of any proposition, leaving the implicit discrete interpretation unchanged. For this reason, we restrict our analysis to steps where a sign change occurs.

The alignment between \bpg and LSAs is illustrated in Figure~\ref{fig:BP_vs_LSA}, which maps each step of the two algorithms based on the following theoretical results.
Recall that every continuous interpretation in the \godel logic maps to a discrete one (Proposition~\ref{th:homomorphism}). 
We first prove that the gradients are sparse (Proposition~\ref{th:active_path}), which implies that each gradient ascent step modifies a single proposition. 
We then prove that if the formula is satisfied, the current interpretation is a fixed point of the update dynamics (Corollary~\ref{cor:1}). 
Otherwise, the modified variable moves toward the decision threshold via a gradient ascent step (Corollary~\ref{cor:2}), eventually changing its sign. 
Moreover, the modified variable always appears in an unsatisfied clause (Theorem~\ref{th:CNF}). These results collectively show that \bpg behaves as a discrete and deterministic LSA applied to the corresponding Boolean formula.

To prove the aforementioned alignment, we need to analyze the gradient of a \godel interpretation w.r.t. the propositions. With abuse of notation, we will refer to the interpretation of a formula by ignoring the function \( \godelint\) when computing the partial derivatives, e.g.,  $\frac{\partial \godelint(\varphi)}{\partial \godelint(\psi)}$ is reported as $\frac{\partial \varphi}{\partial \psi}$.

The first step is to show the sparsity of the gradients. To this end, we define the concept of path.

\begin{definition}[Path]
A \emph{path} \(\mathcal{P}\) from a formula \(\varphi\) to an atomic proposition \(p\) is a sequence of formulas:
${
\mathcal{P} = (\varphi, \psi_1, \psi_2, \dots, p)}
$,
where each \(\psi_i\) is a direct subformula of \(\psi_{i-1} \), and \(p\) is an atomic proposition.
\end{definition}

An example of a path $\mathcal{P}$ for formula $\varphi = (A \lor B) \land \lnot B$ is highlighted in red in Figure~\ref{fig:virtual_imp}(a): $$\mathcal{P} = \big((A \lor B) \land \lnot B, \lnot B, B\big)$$

\begin{lemma}
\label{th:n_even}
    The partial derivative \(\frac{\partial \varphi}{\partial \psi_i}\) along a path \(\mathcal{P} = (\varphi, \psi_1, \dots, p)\), takes values in \(\{-1, 0, 1\}\).
\end{lemma}
\begin{proof}
The partial derivative of $\varphi$ with respect to $p$ along the path is given by
$
   \frac{\partial \varphi}{\partial p} = \frac{\partial \varphi}{\partial \psi_1} \cdot 
   \cdot \dots \cdot \frac{\partial \psi_{n-1}}{\partial p}
$, 
obtained by applying the chain rule. Each node $\psi_i \in \mathcal{P}$ represents either a $\min$, $\max$, or negation, and the corresponding 
partial derivative $\frac{\partial \psi_i}{\partial \psi_{i+1}} \in \{-1, 0, 1\}$. Hence, their product is also in $\{-1, 0, 1\}$.
\end{proof}

\begin{remark}[Tie-breaking Rule]
    When multiple branches of a $\min$ or $\max$ operation result in the same truth value, we assume that a single branch is selected by the gradient.
 \end{remark}

\begin{definition}[Active path]
A path \(\mathcal{P} = (\varphi, \psi_1, \psi_2, \dots, p)\) is \emph{active} if the partial derivative $\frac{\partial \varphi}{\partial p}$ is different from zero.
\end{definition}

\begin{proposition}
\label{th:active_path}
Given a formula $\varphi$, there exists a unique active path \(\mathcal{P} = (\varphi, \psi_1, \dots, p)\) in the computational graph of \(\varphi\)
\footnote{This result previously appeared in a different form~\cite{van2022analyzing}.}.
\end{proposition}

\begin{proof}
    
    We proceed by induction on the structure of the formula.
    
    \paragraph{Base Case (Atomic Proposition):}
    If \(\varphi = p\), where \(p\) is an atomic proposition, we have $\frac{\partial \varphi}{\partial p} = 1$.
    There are no intermediate subformulas, so the path is \(\mathcal{P} = (p)\), and the gradient is trivially 1.
    
    \paragraph{Inductive Step (Negation):}
    Suppose the inductive hypothesis holds for a subformula \(\psi\); i.e., there exists a unique active path in the computational graph of $\psi$. Assume \(\varphi = \neg \psi\). The gradient of \(\varphi\) with respect to \(p\) is:
    \[
    \frac{\partial \varphi}{\partial p} = \frac{\partial \varphi}{\partial \psi} \cdot \frac{\partial \psi}{\partial p} = -\frac{\partial \psi}{\partial p}.
    \]
    which is different from zero, meaning the path is still active.

    \paragraph{Inductive Step (Conjunction/Disjunction):}
    Consider the case where \(\varphi = \psi_1 \land \psi_2\). By the inductive hypothesis, we know that there exists a unique active path from each subformula \(\psi_1\) and \(\psi_2\) to their respective atomic propositions \(p_1\) and \(p_2\), and that the properties hold along these paths.
    
    Since conjunction is interpreted as the minimum function, only the path corresponding to the subformula with the smallest truth value will have a non-zero gradient. Let’s assume \(\godelint(\psi_1) < \godelint(\psi_2) \), then the gradient with respect to \(p_1\) is:
    \[
    \frac{\partial \varphi}{\partial p_1} = \frac{\partial \varphi}{\partial \psi_1} \cdot \frac{\partial \psi_1}{\partial p_1} = \frac{\partial \psi_1}{\partial p_1}.
    \]
    and the gradient with respect to $p_2$ is:
    \[
    \frac{\partial \varphi}{\partial p_2} = \frac{\partial \varphi}{\partial \psi_2} \cdot \frac{\partial \psi_2}{\partial p_2} = 0.
    \]
    If  \(\godelint(\psi_1) > \godelint(\psi_2) \), then the gradient with respect to \(p_1\) is zero, while the gradient with respect to \(p_2\) is non-zero.
    The case for disjunction follows a similar reasoning, with the difference that the maximum function is used. 
    \end{proof}

As an example, consider Figure~\ref{fig:virtual_imp}(a): each partial derivative is zero, except for the path from $\godelint(\varphi)$ to $\godelint(B)$ (red path in the figure).

\begin{theorem}
\label{th:discrete_derivative}
Let $\varphi$ be a formula, and let $\pi$ be a subformula in the active path of $\varphi$. The partial derivative $\frac{\partial \varphi}{\partial \pi}$ is equal to the product of the \ii of the two formulas:
\[
\frac{\partial \varphi}{\partial \pi} = \boolint(\varphi) \cdot \boolint(\pi)
\]
\end{theorem}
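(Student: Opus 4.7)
The plan is to combine Theorem~\ref{th:n_even} (which tells us the magnitude and sign of the derivative along the active path) with Theorem~\ref{th:homomorphism} (which relates the sign function to the Boolean interpretation), by tracking how $\godelint(\varphi)$ relates to $\godelint(\pi)$ as we move along the active path.

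First I would use Theorem~\ref{th:n_even} to write $\frac{\partial \varphi}{\partial \pi} = (-1)^n$, where $n$ is the number of negations in the subpath from $\varphi$ to $\pi$ (this is nonzero precisely because $\pi$ is on the active path). Next, I would observe a crucial structural fact about the active path: at each $\min$ or $\max$ node, the selected subformula is the one whose value equals the parent's value, so these connectives propagate the truth value unchanged; the only operation that alters $\godelint$ along the path is negation, which flips the sign. Therefore
\[
\godelint(\varphi) = (-1)^n \, \godelint(\pi).
\]
This is really the one step that needs justification, and it follows by a straightforward induction on the length of the subpath from $\varphi$ to $\pi$, using the base case that $\min$/$\max$ on the active path preserve value and negation flips sign.

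Applying the sign function to both sides and using its multiplicativity on $(-1)^n \in \{-1, +1\}$, I get $s(\godelint(\varphi)) = (-1)^n \, s(\godelint(\pi))$. By Theorem~\ref{th:homomorphism}, this is exactly $\boolint(\varphi) = (-1)^n \, \boolint(\pi)$. Multiplying both sides by $\boolint(\pi)$ and using $\boolint(\pi)^2 = 1$ (since $\boolint(\pi) \in \{-1, +1\}$) yields
\[
\boolint(\varphi) \cdot \boolint(\pi) = (-1)^n = \frac{\partial \varphi}{\partial \pi},
\]
which is the desired identity.

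The main obstacle, if any, is making the inductive argument in step two fully rigorous: specifically, one must be careful at $\min$/$\max$ nodes where two subformulas happen to attain the same $\godelint$ value (the tie-breaking case discussed before Theorem~\ref{th:n_even}), since then multiple branches could a priori be selected as active. However, the theorem's hypothesis that $\pi$ lies on the active path already fixes which branch is selected at each such node, and by definition that selected branch carries the same $\godelint$ value as the parent. Once this is noted, the induction is immediate and the rest of the proof is purely algebraic.
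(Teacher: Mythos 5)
Your proof is correct, but it is organised differently from the paper's. The paper proves the identity $\frac{\partial \varphi}{\partial \pi} = \boolint(\varphi)\cdot\boolint(\pi)$ directly by structural induction on $\varphi$: the base case $\varphi=\pi$ gives $1=\boolint(\pi)^2$, the negation case flips the sign of both the derivative and $\boolint(\varphi)$, and the $\min$/$\max$ case uses $\godelint(\varphi)=\godelint(\psi_1)$ for the selected child to conclude $\boolint(\varphi)=\boolint(\psi_1)$. You instead take Theorem~\ref{th:n_even} as given (so $\frac{\partial\varphi}{\partial\pi}=(-1)^n$ on the active path), prove the intermediate lemma $\godelint(\varphi)=(-1)^n\,\godelint(\pi)$ by an induction along the path, and then transport it to the Boolean side via the sign homomorphism of Theorem~\ref{th:homomorphism}. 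The atomic facts are the same in both arguments --- $\min$/$\max$ preserve the value along the selected branch and negation flips the sign --- but your decomposition makes explicit a clean invariant (the \godel value of $\varphi$ equals that of $\pi$ up to the parity of negations) that the paper leaves implicit inside its induction, and it reuses Theorems~\ref{th:n_even} and~\ref{th:homomorphism} rather than redoing the case analysis; the paper's version is self-contained and does not rely on the sign function being multiplicative. One small point worth stating explicitly in your argument: $s(\godelint(\varphi))$ and $s(\godelint(\pi))$ must be well defined, i.e.\ the values are nonzero, which is the standing assumption the paper makes when working over $\mathbb{R}\setminus\{0\}$. Your handling of the tie-breaking case is also fine, since in a tie the selected child still has the same \godel value as the parent.
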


\begin{proof}
    We proceed by induction on the structure of the formula \(\varphi\).
    
    \paragraph{Base Case:} If \(\varphi = \pi\), then: 
    \begin{align*}
        \frac{\partial \varphi}{\partial \pi} &= \frac{\partial \pi}{\partial \pi} = 1 = (\pm 1)^2 \\
        &= \boolint(\varphi)^2 = \boolint(\varphi) \cdot \boolint(\pi)
    \end{align*}

    \paragraph{Inductive Step (Negation):} Consider \(\varphi = \neg \psi\). By definition of the negation, $\godelint(\varphi) = - \godelint(\psi)$ and $\frac{\partial \varphi}{\partial \psi} = -1$. Applying the chain rule:
    \begin{align}
    \frac{\partial \varphi}{\partial \pi} &= \frac{\partial \varphi}{\partial \psi} \cdot \frac{\partial \psi}{\partial \pi} = - \frac{\partial \psi}{\partial \pi}\\
    &= - \boolint(\psi) \cdot \boolint(\pi) \label{eq:inductive} \\
    &= \ \ \ \boolint(\varphi) \cdot \boolint(\pi) \label{eq:negation}
    \end{align}
    where (\ref{eq:inductive}) corresponds to the inductive hypothesis, and (\ref{eq:negation}) is 
    a consequence of $\varphi = \neg \psi$. Hence, the proposition holds.

    \paragraph{Inductive Step (Conjunction/Disjunction):} Consider \(\varphi = \psi_1 \land \psi_2\). The derivative is non-zero only for the subformula with the smallest truth value, for which it is $1$. Assume \(\psi_1\) to have the smallest truth value, i.e.
    $$
    \godelint(\varphi) = \min(\godelint(\psi_1), \godelint(\psi_2)) = \godelint(\psi_1)
    $$
    then, we have:
    \begin{align}
    \frac{\partial \varphi}{\partial \pi} &= \frac{\partial \varphi}{\partial \psi_1} \cdot \frac{\partial \psi_1}{\partial \pi} = \frac{\partial \psi_1}{\partial \pi}\\
     &= \boolint(\psi_1) \cdot \boolint(\pi) \label{eq:inductive2} \\
     &= \boolint(\varphi) \cdot \boolint(\pi) \label{eq:min}
    \end{align}
    where (\ref{eq:inductive2}) holds by the inductive hypothesis, and (\ref{eq:min}) is a consequence of $\godelint(\varphi) = \godelint(\psi_1)$.
    Similarly, for disjunction, the derivative is non-zero for the subformula with the largest truth value, yielding the same result.
    \end{proof}

Theorem~\ref{th:discrete_derivative} shows that the direction of the gradient on a specific formula depends exclusively on the \ii, as the following two related corollaries highlight.

\begin{corollary}
\label{cor:1}
    Let $\varphi$ be a formula, and let $p$ be a proposition in the active path. If $\varphi$ is satisfied ($\boolint(\varphi) = 1$), then the implicit interpretation is not modified.
\end{corollary}
\begin{proof}
    By Theorem~\ref{th:discrete_derivative}, it holds: $\frac{\partial \varphi}{\partial p} = \boolint(p)$. Hence, the sign is not changed by the update rule: 
    \begin{align}
        \boolint'(p) &= s(\godelint(p) + \lambda \boolint(p)) \\
        &= s\big(\boolint(p) \cdot (|\godelint(p)| + \lambda)\big) = \boolint(p) \label{eq:sign_trick}
    \end{align}
    where $\lambda > 0$ is the learning rate, and $\boolint'(p)$ is the implicit interpretation after the update. In (\ref{eq:sign_trick}) we exploit the equality $\godelint(p) = s(\godelint(p)) \cdot |\godelint(p)| =\boolint(p) \cdot |\godelint(p)|$ to separate ${|\godelint(p)| + \lambda > 0}$ from the sign $\boolint(p)$.
\end{proof}

\begin{corollary}
\label{cor:2}
    Let $\varphi$ be a formula, and let $p$ be a proposition in the active path. If $\varphi$ is \emph{not} satisfied ($\boolint(\varphi) = -1$), then the gradient ascent step moves the variable toward the decision threshold, eventually flipping its sign.
\end{corollary}
\begin{proof}
    By Theorem~\ref{th:discrete_derivative}, it holds: $\frac{\partial \varphi}{\partial p} = - \boolint(p)$. Hence: 
    $$ \boolint'(p) = s(\godelint(p) - \lambda \boolint(p)) = s\big(\boolint(p) \cdot (|\godelint(p)| - \lambda)\big)$$
    If $|\godelint(p)| < \lambda$, then $\boolint'(p) = - \boolint(p)$. Otherwise, the sign is the same, but the magnitude is reduced:
     $${|\godelint'(p)| = |\godelint(p)| - \lambda < |\godelint(p)|}$$

\end{proof}

\begin{theorem}
\label{th:CNF}
Let $\varphi$ be a formula expressed in CNF:
$
\varphi = \bigwedge_{i=1}^{m} c_i
$,
where $c_i$ represent the i$^{th}$ clause, and $\boolint$ a boolean interpretation. If $\varphi$ is not satisfied ($\boolint(\varphi)=-1$), the active path passes through an unsatisfied clause.
\end{theorem}
\begin{proof}
First, note that $\varphi$ is a conjunction of clauses. As a consequence, $\godelint(\varphi) = \min_i \ \godelint(c_i)$, and  $\frac{\partial \varphi}{\partial c_k} \neq 0 \iff k = \arg\min_i \langle \godelint(c_i) \rangle$.
Let $c_k$ be a satisfied clause: $\godelint(c_k) > 0$. Since formula $\varphi$ is not satisfied, there must be at least an unsatisfied clause $c_j$ ($\godelint(c_j) < 0$). Then: $\godelint(c_j) < 0 < \godelint(c_k)$, and it must hold $\frac{\partial \varphi}{\partial c_k} = 0$.

\end{proof}

\section{The \godel Trick}
\label{sec:godel-trick}

In the previous section, we showed that \bpg mimics the behaviour of deterministic local search algorithms. 
When applied to a CNF, both methods iteratively select an unsatisfied clause, flipping the truth value of one of its literals. 
The difference is in
the selection heuristic: 
instead of relying on the boolean values, 
\bpg selects the variable based on continuous truth values given by the \godel interpretation.

However, much like LSAs converge to local minima, \bpg encounters analogous challenges. 
In the continuous \godel landscape, local maxima may emerge at the decision boundaries. 
This occurs when the gradients from different clauses point towards the threshold from opposite sides; since the threshold itself is excluded and never exactly reached, the variable is forced to continuously cross it. 
As the gradient flips its direction at each crossing, the system enters an oscillation. 
Consequently, what acts as a local optimum in the continuous space manifests as a \textit{cycle} between distinct discrete interpretations on opposite sides of the boundary.

This behavior is illustrated in Figure~\ref{fig:GT} for the formula $\varphi = (A \lor B) \land \lnot B$. 
When $\godelint(A) < 0 < \godelint(B)$, the gradient points in the direction of $\lnot B$. Conversely, when $\godelint(A) < \godelint(B) < 0$, the gradient points towards $B$. 
The result is a cycle between two unsatisfied discrete interpretations (${\lnot A \land \lnot B}$ and ${\lnot A \land B}$).

\begin{figure}
    \centering
    \includegraphics[width=0.8 \linewidth]{GT.png}
    \vspace{-15pt}
    \caption{\textbf{Vector field of the gradient of a \godel interpretation} Formula $\varphi = (A \lor B) \land \lnot B$. Green zone: it corresponds to $\varphi$ being satisfied ($A \land \lnot B$); red zone: gradients points on opposite directions, forming a cycle between two unsatisfied interpretations: $\lnot A \land \lnot B$ and $\lnot A \land B$.}
    \label{fig:GT}
\end{figure}

\subsection{The \godel Trick}

The \textbf{\godel Trick} (\gt) introduces a controlled perturbation to the \godel interpretation of the propositions, enabling stochastic exploration of the solution space to escape local optima. Formally, we define the perturbed interpretation of a proposition $p$ as: $\godelint_\epsilon(p) = \ \godelint(p) + \epsilon$, where $\epsilon$ is a noise term preventing cycles. For a general formula, we use the recursive definition of \godel interpretation as defined in Section~\ref{sec:background}. 

The \godel Trick is a reparameterization trick \cite{DBLP:journals/corr/KingmaW13}, where the expected value of the gradient of a formula $\varphi$ corresponds to the gradient of the expected value of $\varphi$: $    \mathbb{E}_\epsilon \left[ \nabla_{\godelint(p)} \godelint_\epsilon(\varphi) \right] = \nabla_{\godelint(p)} \mathbb{E}_\epsilon \left[ \godelint_\epsilon(\varphi) \right]$. 
This can be derived from standard results on pathwise gradient estimators \cite{mohamed2020monte}.

\subsection{\godel Trick as approximate probabilistic inference}
\label{sec:prob_inference}

In Section~\ref{sec:hom}, we proved that the sign function $s$ is an homomorphism that maps a \godel interpretation to a discrete implicit one.
Similarly, a continuous probability distribution over \godel interpretation can be mapped to an \emph{implicit discrete distribution} over Boolean interpretations:
$\boolint_\epsilon(\varphi) = s(\godelint_\epsilon(\varphi))$.

For a proposition $p$, the probability $P\big( \mathcal{B}_\epsilon(p)\big)$ of $p$ being true under the implicit distribution $\boolint_\epsilon$ is:
\begin{align}
    P\big( \boolint_\epsilon(p) \big) 
    &= P\big( \godelint_\epsilon(p) > 0 \big) = P\big(\godelint(p) + \epsilon > 0\big)\\
    &= 1 - F_{\epsilon}(-\godelint(p)) = \theta_\epsilon(\godelint(p)). \label{eq:cdf}
\end{align}
where $F_{\epsilon}$ is the cumulative distribution function (CDF) of $\epsilon$, and $\theta_\epsilon(x) = 1 - F_\epsilon(-x)$ is the function that maps the unperturbed truth value $\godelint(p)$ of a proposition $p$, to the probability of $p$ being true according to $\boolint_\epsilon$.

We next establish a connection between the implicit distribution $\boolint_\epsilon$ and probabilistic inference. In this context, the probability of a formula being true is equivalent to its Weighted Model Count (WMC).

\begin{definition}[Probabilistic logic]
Let $\varphi$ be a formula defined over some propositions $p_i$. Let $\pi_i$ be the probability associated with proposition $p_i$. Then, the probability associated to $\varphi$ under probabilistic logic is defined as:
\begin{align}
    P(\varphi) &= \sum_{\boldsymbol{\omega} \in \{-1, 1 \}^n} H\big( \boolint(\varphi; \boldsymbol{\omega})\big) P(\boldsymbol{\omega})
\label{eq:expected_value_bool} \\
    P(\boldsymbol{\omega}) &= \prod_{i = 1}^n \pi_i^{H(\omega_i)} (1 - \pi_i)^{1 - H(\omega_i)}\label{eq:prod_bool}
\end{align}
where $H$ is the Heaviside function that maps positive values to one, and negative values to zero: $H(x) = \mathbf{1}(x>0)$.
\end{definition}

The connection between \gt and probabilistic logic is given by the following theorem.

\begin{theorem}
\label{th:prob_inf}

Let $\varphi$ be a formula defined over some propositions $p_i$. Let $\varphi$ be interpreted under probabilistic logic, with $\pi_i$ the probability associated with proposition $p_i$, and let $\epsilon$ be a continuous distribution over the real numbers. 

If we define the proposition's truth values as $\mathcal{G}(p_i) = \theta_\epsilon^{-1}(\pi_i)$, then:
$
P\big( \mathcal{B}_\epsilon(\varphi)\big) = P(\varphi)
$,
where $P(\varphi)$ is the probability of $\varphi$ being true under the probabilistic interpretation. 
\end{theorem}

\begin{proof}
We first consider the probability of $\boolint_\epsilon(\varphi)$ being positive. Such a probability can be defined as the expected value of the formula's truth value, assuming it to be in $\{0, 1\}$ rather than $\{-1, 1\}$. Let $\boldsymbol{\epsilon}$ be the vector of sampled noise, and $\boldsymbol{\mu}$ the corresponding vector of noisy interpretations:
$$\mu_i = \godelint_\epsilon(p_i) = \godelint(p_i) + \epsilon_i$$ Then:
\begin{align}
    P(\boolint_\epsilon(\varphi)) &= \mathbb{E}_{\boldsymbol{\epsilon}} \big[ H\big( \boolint_\epsilon(\varphi) \big) \big]\\
    &= \mathbb{E}_{\boldsymbol{\epsilon}} \big[ H\big( s( \godelint(\varphi; \boldsymbol{\mu}) ) \big) \big]
\end{align}
where $H$ is the Heaviside function, $\godelint(\varphi; \boldsymbol{\mu})$ is the \godel interpretation of $\varphi$ (as defined in~ Section~\ref{sec:background}) assuming $\godelint(p_i) = \mu_i$.

\noindent By applying Proposition~\ref{th:homomorphism}:
\begin{align}
    P(\boolint_\epsilon(\varphi))
    &= \mathbb{E}_{\boldsymbol{\epsilon}} \big[ H\big( s( \godelint(\varphi; \boldsymbol{\mu}) ) \big) \big] \\
    &= \mathbb{E}_{\boldsymbol{\epsilon}} \big[ H\big( \boolint(\varphi; s(\boldsymbol{\mu}))  \big) \big]\\
    &= \mathbb{E}_{\boldsymbol{\gamma}} \big[ H\big( \boolint(\varphi; \boldsymbol{\gamma})  \big) \big] \label{eq:change_of_variable}
\end{align}
where in (\ref{eq:change_of_variable}) we applied a change of variable on the expectation by defining $\gamma_i = s(\mu_i)$. Note that $\boldsymbol{\gamma} \in \{ -1, 1 \}^n$, and the expected value can be represented as a summation:
\begin{equation}
\label{eq:exected_value_godel}
    P(\boolint_\epsilon(\varphi)) = \sum_{\boldsymbol{\gamma} \in \{-1, 1 \}^n} H\big( \boolint(\varphi; \boldsymbol{\gamma})\big) P(\boldsymbol{\gamma})
\end{equation}
with
\begin{equation}
\label{eq:prod_godel}
    P(\boldsymbol{\gamma}) = \prod_{i = 1}^n P(\gamma_i)^{H(\gamma_i)} (1 - P(\gamma_i))^{1 - H(\gamma_i)}
\end{equation}

Equations (\ref{eq:prod_godel}) and (\ref{eq:prod_bool}) are identical, except for the usage of $P(\gamma_i)$ instead of $\pi_i$. We need to prove the equivalence between these two probabilities:
\begin{align}
    P(\gamma_i) &= P(s(\mu_i)) \\
    &= P(\godelint(p_i) + \epsilon_i > 0) \label{eq:mu_def} \\
    &= \theta_\epsilon(\godelint(p_i)) \label{eq:cdf_usage} \\
    &=\theta_\epsilon(\theta_\epsilon^{-1}(\pi_i)) = \pi_i \label{eq:G_e_definition}
\end{align}
where (\ref{eq:mu_def}) is obtained by applying the definition of $\mu_i$, (\ref{eq:cdf_usage}) derives from (\ref{eq:cdf}), and (\ref{eq:G_e_definition}) comes from the definition of $\godelint(p_i)$ in theorem statement.

As a consequence:
\begin{align}
P(\boolint_\epsilon(\varphi)) &= \sum_{\boldsymbol{\gamma} \in \{-1, 1 \}^n} H\big( \boolint(\varphi; \boldsymbol{\gamma})\big) P(\boldsymbol{\gamma}) \\
&= \sum_{\boldsymbol{\omega} \in \{-1, 1 \}^n} H\big( \boolint(\varphi; \boldsymbol{\omega})\big) P(\boldsymbol{\omega}) = P(\varphi)
\end{align}
\end{proof}

Theorem~\ref{th:prob_inf} formally connects GT with probabilistic inference by showing that it acts as a Monte Carlo estimator for WMC. This result provides a theoretical justification for our approach and establishes a basis for future research directions (see Section~\ref{sec:conclusion}). However, Theorem~\ref{th:prob_inf} does not imply that GT is also an estimator of the WMC gradient; the specific nature of its optimization dynamics, which remains rooted in local search, is further clarified in the following remark.

\begin{remark}
    Unlike \bpg, \gt optimizes a probability distribution over assignments rather than a single state. Therefore, it can not be properly framed as a LSA. Nonetheless, the local search intuition persists: when a sampled solution violates a clause, the gradient penalizes the responsible literal's truth value within the distribution parameters. Effectively, this performs literal flips in expectation rather than deterministically.
\end{remark}

\subsection{The choice of the noise distribution}
The noise $\epsilon$ plays a crucial role in the \gt. We consider two options: logistic and uniform distributions.

\paragraph{Logistic distribution.}
\label{paragraph:logistic_distribution}
For a standard logistic distribution, the corresponding CDF is the sigmoid function $F_\epsilon(x) = \sigma(x) = \frac{1}{1+e^{-x}}$, hence:

\begin{align}
    \theta_\epsilon(\godelint(p)) &= 1 - F_\epsilon(-\godelint(p))\\
    &= 1 - \sigma(-\godelint(p)) = \sigma(\godelint(p))
\end{align}

Thus, in the logistic noise case, $\godelint(p)$ can be interpreted as the logit of the probability of $p$ being true, establishing a direct link between the \gt and probabilistic inference.
Moreover, since the logistic distribution can be expressed as the difference of two Gumbel distributions, one can establish a connection between the \godel Trick and the Gumbel-Max Trick~\cite{gumbel1954statistical}, a common reparameterization method for estimating expected values in discrete settings (more details in Section~\ref{appendix:gumbel}).

\paragraph{Uniform distribution.}
The probability mass of the logistic distribution is concentrated close to the center, potentially reducing the exploration effect of the noise.
Therefore, we also consider the uniform distribution, which is more spread out, potentially increasing the exploration of the solution space.

Note that the probabilistic interpretation of \gt holds for other distributions than the logistic, as highlighted by Equation~\ref{eq:cdf}. However, we can no longer directly interpret $\godelint(p)$ as the logits of $P\big( p \big)$ as it induces a different discrete distribution. Specifically, if $\epsilon \sim \mathcal{U}(a, b)$:
\begin{equation}
    \theta_\epsilon(x) = \begin{cases}
        0 \quad \quad \ \text{if} \ x < -b \\
        \frac{x + b}{b - a} \quad \text{if} \ x \in [-b, -a]\\
        1 \quad \quad \ \text{if} \ x > -a.
    \end{cases}
\end{equation}
Given a probability $\pi$ of a proposition $p$, the unperturbed truth value is $\theta_\epsilon^{-1}(\pi) = \pi \cdot (b - a) - b$.

\section{\godel Trick with Categorical variables}
\label{sec:categorical}
The \godel Trick proposed in the Section~\ref{sec:godel-trick} is defined exclusively on boolean variables. Despite its generality, in the context of NeSy, categorical variables are frequently used (see as an example Section~\ref{sec:sudoku}), for which the \godel Trick is not directly applicable. Let $\pi_i$, with $i \in [1,\dots K]$, be the probabilities associated with $K$ categories, such that $\sum_{i=1}^K \pi_i = 1$. We can define a propositional variable $p_i$ for each category, and use the \godel Trick: $\godelint(p_i) = \theta_\epsilon^{-1}(\pi_i)$. However, by doing so we are implicitly assuming independence between the propositions. On the contrary, we need to assure that exactly one category is true. To include such a constraint, a possibility is to enforce the satisfaction of the XOR between the $K$ propositions trough logical constraints. However, such a solution could require to include a large amount of rules, reducing the efficiency of the method.

We propose a different strategy (see Section~\ref{sec:sudoku} for an empirical evaluation): we define a $\mathrm{shift}$ function that shifts the perturbed truth values $\godelint_\epsilon(p_i)$ of propositions $p_i$ in order to enforce the aforementioned constraint:
$$
\mathrm{shift}({\bf x}) = {\bf x} - \frac{x_i + x_j}{2},
$$
where $i$ and $j$ are the indexes of the highest and second highest elements of the vector ${\bf x}$, respectively. Note that for any vector ${\bf x}$, the shifted vector $\bar{\bf x} = \mathrm{shift}({\bf x})$ contains exactly one value larger than zero:
\begin{align}
\bar{x}_i &= x_i -  \frac{x_i + x_j}{2} = \frac{x_i - x_j}{2} > 0 \\
\bar{x}_j &= x_j -  \frac{x_i + x_j}{2} = \frac{x_j - x_i}{2} < 0 \\
\bar{x}_k &\leq \bar{x_j} < 0 \quad \forall k \neq i
\end{align}

It is worth noting two properties of the shifted vector: first, the highest element of $\bar{\bf x}$ is the only one higher than zero; second, the second-highest element is the negation of the first: $\bar{x}_j = - \bar{x}_i$. 

We define the vector $\godelint_\epsilon({\bf p})$ of perturbed truth values as:
$$\godelint_\epsilon({\bf p}) = \left\langle \godelint_\epsilon(p_1),\godelint_\epsilon(p_2) \dots \godelint_\epsilon(p_K) \right\rangle$$

We also define its shifted version as: $\bar{\godelint}_\epsilon({\bf p}) = \mathrm{shift}(\godelint_\epsilon({\bf p}))$. Because of the properties of the $\mathrm{shift}$ function, we have:
$\bar{\godelint}_\epsilon(p_i) > 0$ for the highest value $i$ in $\godelint_\epsilon({\bf p})$, and $\forall k \neq i \ \ \ \bar{\godelint}_\epsilon(p_k) < 0$. As a consequence, the constraint of the categorical variable is satisfied (i.e., exactly one value is true). Since the shifted truth values are still used in combination with \godel Logic:
\begin{equation}
    \bar{\godelint}_\epsilon(p_i) = - \bar{\godelint}_\epsilon(p_j) = \bar{\godelint}_\epsilon(\lnot p_i)
\end{equation}
representing the categorical variable as a Bernoulli variable that select one among the two highest values.

Finally, when using the Gumbel distribution to generate noise, the resulting behavior is equivalent to the one of the Gumbel-Max Trick (see Section~\ref{appendix:gumbel}).

\section{Connections with Gumbel-Max Trick}
\label{appendix:gumbel}
The Gumbel-Max Trick is a well-known reparameterization approach for sampling from a categorical distribution~\cite{gumbel1954statistical,jang2022categorical}. Given a categorical distribution defined by probabilities $\pi_i$, the trick allows for an efficient way to sample from this distribution by adding continuous noise to its logits.

Starting from the softmax logits ${\bf z} = \left\langle z_1, z_2 \dots z_K\right\rangle$ of the probabilities $\pi_i$, one can add noise sampled from a standard Gumbel distribution to the logits, and the distribution of the argmax of the resulting vector corresponds to the original categorical distribution. Consider the following equation:
\begin{equation}
    {\bf x} = \mathrm{onehot}(\arg\max_i (z_i + \epsilon_i))
\end{equation}
with $\epsilon_i \sim \text{Gumbel}(0, 1)$. The key aspect of the Gumbel-Max Trick is that the probability that $\arg\max_i (z_i + \epsilon_i) = j$ is equivalent to $\pi_j$.

Now, consider replacing the $\arg\max$ function with the $\mathrm{shift}$ function, and applying the sign function $s$ to the resulting vector:
\begin{equation}
    {\bf v} = s\big(\mathrm{shift}\big(\left\langle z_i + \epsilon_i \right\rangle \big)\big)
\end{equation}
with the noise still being sampled from a standard Gumbel distribution. As proven in section~\ref{sec:categorical}, the $\mathrm{shift}$ function translates the values of the vector in such a way that only the highest is greater than zero. As a consequence, the sign function $s$ maps all the values to $-1$, except for the $\arg\max$, which is mapped to $+1$.

Note that, in our framework, we defined $\top$ to be $+1$, and $\bot$ to be $-1$, different from the context of the Gumbel-Max Trick, where $\bot$ is defined as $0$. Therefore, vector ${\bf v}$ can be seen as a rescaled version of ${\bf x}$ suitable for the application of the \godel Trick:
\begin{equation}
    {\bf v} = 2 {\bf x} - 1
\end{equation}

\label{sec:exp}

\section{Experiments on SAT}
\label{sec:SAT}

In this section, we evaluate the performance of the \gt method on SAT problems to validate our theoretical findings. SAT problems contain strong dependencies among clauses, which often cause fuzzy logics to get stuck in local optima. This makes them a natural stress test for \gt, which is specifically designed to escape such local optima.

We focus on satisfiable instances from the SATLIB library~\cite{hoos2000satlib}, since \gt is an incomplete solver. Note that most problems in SATLIB are satisfiable, and many SAT solvers are themselves incomplete.

\paragraph{Setup.}
SATLIB is a well-established repository of SAT benchmarks, organized into collections representing different domains. For example, the UF family contains random 3-SAT problems, while the Planning set includes SAT-encoded planning instances.

We compare \gt against three fuzzy semantics: Product, \lukasiewicz, and \godel logics. These are natural baselines, as \gt can be applied in the same contexts and it is proposed as an alternative in the NeSy context.

\paragraph{Methodology.}
\begin{figure}   %
    \centering
    \includegraphics[width=0.8\linewidth]{uf20-91_granular_plot.png}
    \caption{Ratio of solved problems on the uf20-91 benchmark of SATLIB. Average of 100 runs over 50k epochs.}
    \label{fig:plot_results}
\end{figure}

Initially, we conducted a grid search to optimise learning rate and momentum. This search was performed separately for each method on the simplest benchmark of the UF collection: \texttt{uf20-91}. For the noise, we used the standard logistic and uniform distributions, since our preliminary experiments showed that GT is not particularly sensitive to the choice of variance. We run these starting experiments on an NVIDIA RT2080Ti with 11GB of RAM.

Figure \ref{fig:plot_results} shows the percentage of solved problems for each method as a function of the number of training epochs. The reported averages are computed over 100 samples (equivalent to 100 independent runs), each evaluated on the 1000 instances of SAT in \texttt{uf20-91}. Among the baselines, Product performs best, followed by \godel, while \lukasiewicz fails to solve any instance. 

We then applied all methods, excluding \lukasiewicz due to its complete failure, to the remaining SATLIB benchmarks. The results are summarized in Table \ref{tab:methods_comparison}. The table reports the average performance across benchmark collections.

\begin{table*}[t]
    \centering
       \caption{Comparative table showing results on SATLIB benchmarks for four methods: Product Logic, \godel Logic, \godel Trick Logistic, and \godel Trick Uniform. Columns: S stands for Sample Solved (mean $\pm$ std of number of solved samples), B stands for Best Solution f(mean $\pm$ std of number of problem solved by keeping the best results among the samples). Highest S and B in bold.}
    \label{tab:methods_comparison}
    \scshape
    \resizebox{\textwidth}{!}{%
\begin{tabular}{l|rr|rr|rr|rr}
\toprule
\textbf{Domain} & \multicolumn{2}{c|}{\textbf{Product Logic}} & \multicolumn{2}{c|}{\textbf{\godel Logic}} & \multicolumn{2}{c|}{\textbf{GT Logistc}} & \multicolumn{2}{c}{\textbf{GT Uniform}} \\
 & \textbf{S(\%)} & \textbf{B(\%)} & \textbf{S(\%)} & \textbf{B(\%)} & \textbf{S(\%)} & \textbf{B(\%)} & \textbf{S(\%)} & \textbf{B(\%)} \\
\hline
UF & $2.9 \pm 7.4$ & $6.6 \pm 15.0$ & $0.9 \pm 2.4$ & $11.5 \pm 29.6$ & $25.0 \pm 25.8$ & $57.5 \pm 24.3$ & $\mathbf{74.5} \pm \mathbf{12.8}$ & $\mathbf{99.4} \pm \mathbf{\phantom{0}0.7}$ \\
\rowcolor{gray!20}
RTI/BMS & $0.0 \pm 0.0$ & $0.0 \pm \phantom{0}0.0$ & $0.0 \pm 0.0$ & $0.0 \pm 0.0$ & $8.3 \pm 4.2$ & $28.3 \pm 12.3$ & $\mathbf{46.2} \pm \mathbf{23.4}$ & $\mathbf{95.6} \pm \mathbf{\phantom{0}4.4}$ \\
CBS & $0.0 \pm 0.0$ & $0.0 \pm \phantom{0}0.0$ & $0.0 \pm 0.0$ & $0.0 \pm 0.0$ & $6.7 \pm 10.6$ & $24.8 \pm 25.1$ & $\mathbf{70.3} \pm \mathbf{19.3}$ & $\mathbf{100.0} \pm \mathbf{\phantom{0}0.1}$ \\
\rowcolor{gray!20}
FLAT & $0.0 \pm 0.0$ & $0.0 \pm \phantom{0}0.0$ & $0.0 \pm 0.0$ & $1.2 \pm 2.4$ & $5.4 \pm 10.7$ & $20.3 \pm 39.9$ & $\mathbf{41.8} \pm \mathbf{34.8}$ & $\mathbf{80.2} \pm \mathbf{39.6}$ \\
SW & $0.0 \pm 0.0$ & $0.0 \pm \phantom{0}0.0$ & $0.0 \pm 0.0$ & $0.0 \pm 0.0$ & $0.0 \pm 0.0$ & $0.0 \pm 0.0$ & $\mathbf{30.2} \pm \mathbf{44.7}$ & $\mathbf{38.0} \pm \mathbf{46.9}$ \\
\rowcolor{gray!20}
PLANNING & $0.0 \pm 0.0$ & $0.0 \pm \phantom{0}0.0$ & $0.0 \pm 0.0$ & $0.0 \pm 0.0$ & $6.8 \pm 6.8$ & $21.4 \pm 21.4$ & $\mathbf{9.9} \pm \mathbf{9.9}$ & $\mathbf{28.6} \pm \mathbf{28.6}$ \\
AIS & $0.0 \pm 0.0$ & $0.0 \pm \phantom{0}0.0$ & $0.0 \pm 0.0$ & $0.0 \pm 0.0$ & $0.8 \pm 0.0$ & $25.0 \pm 0.0$ & $\mathbf{29.8} \pm \mathbf{0.0}$ & $\mathbf{75.0} \pm \mathbf{\phantom{0}0.0}$ \\
\rowcolor{gray!20}
QG & $0.0 \pm 0.0$ & $0.0 \pm \phantom{0}0.0$ & $0.0 \pm 0.0$ & $0.0 \pm 0.0$ & $\mathbf{0.4} \pm \mathbf{0.0}$ & $\mathbf{10.0} \pm \mathbf{0.0}$ & $0.0 \pm 0.0$ & $0.0 \pm \phantom{0}0.0$ \\
BMC & $0.0 \pm 0.0$ & $0.0 \pm \phantom{0}0.0$ & $0.0 \pm 0.0$ & $0.0 \pm 0.0$ & $0.0 \pm 0.0$ & $0.0 \pm 0.0$ & $0.0 \pm 0.0$ & $0.0 \pm \phantom{0}0.0$ \\
\rowcolor{gray!20}
DIMACS & $0.0 \pm 0.0$ & $0.0 \pm \phantom{0}0.0$ & $0.0 \pm 0.0$ & $0.0 \pm 0.0$ & $0.0 \pm 0.0$ & $0.0 \pm 0.0$ & $\mathbf{1.4} \pm \mathbf{2.3}$ & $\mathbf{27.4} \pm \mathbf{35.9}$ \\
BEIJING & $0.0 \pm 0.0$ & $0.0 \pm \phantom{0}0.0$ & $0.0 \pm 0.0$ & $0.0 \pm 0.0$ & $5.9 \pm 0.0$ & $\mathbf{15.4} \pm \mathbf{0.0}$ & $\mathbf{14.8} \pm \mathbf{0.0}$ & $\mathbf{15.4} \pm \mathbf{\phantom{0}0.0}$ \\
\bottomrule
\end{tabular}
    }
\end{table*}

 Given that each sample can be run in parallel, we explore multiple initial interpretations efficiently. For each problem, we evaluated 100 samples (equivalent to 100 independent runs) and computed two metrics: \textit{S}, the percentage of successfully solved instances across all samples, and \textit{B}, the number of solved problems considering the best result among the 100 samples for each problem. Note that each sample is computed in parallel on the GPU, and memory usage remains minimal. Consequently, the \textit{B} measure can be further improved by increasing the number of samples.

\paragraph{Results.}
The results clearly demonstrate that \gt yields a substantial improvement over all baselines, with the Uniform variant achieving the best performance. These results align with our theoretical analysis: while \godel logic provides the correct gradient direction for a discrete flip, it lacks a mechanism to escape local optima in complex combinatorial landscapes. The noise introduced in \gt-based models facilitates this exploration, allowing the solver to ``jump'' between different regions of the Boolean hypercube.

The results also show an advantage of \gt over Product logic. The latter is often viewed as an approximation of probabilistic logic that assumes independence between clauses. In contrast, while \gt also possesses a probabilistic interpretation (see Theorem~\ref{th:prob_inf}), it does not make such an assumption. This property explains the difference in performances on SAT problems, which are characterized by tight variable dependencies. 

Despite these gains, certain benchmarks like {\scshape bmc} and {\scshape qg} remain challenging for \gt. These results suggest that, while \gt provides a powerful differentiable alternative to traditional fuzzy logics, very large-scale combinatorial problems may still require the integration of more advanced search heuristics, which we leave for future investigation.

\section{Visual Sudoku}
\label{sec:sudoku}

In addition to SAT problems, we evaluated our framework on the Visual Sudoku task \cite{augustine2022visual}, a NeSy benchmark where the goal is to classify the validity of a $9\times9$ grid represented by MNIST images, where the only available supervision is the global validity of the board. 

The architecture stacks a GT layer on top of a neural perception network. Specifically, a CNN maps each MNIST image in the grid to propositions $p_{i,k}$, where $i$ and $k$ denote the cell index and the digit, respectively. These outputs are then processed by the \gt-based reasoner to evaluate the satisfaction of the Sudoku rules. The logical constraints enforce that no two cells $i$ and $j$ in the same row, column, or 3×3 block can contain the same digit $k$, which we encode via the formula $\phi_{i,j,k} = \lnot p_{i,k} \lor \lnot p_{j,k}$.

We compare our approach with a CNN, NeuPSL \cite{pryor2022neupsl}, and A-NeSI \cite{van2023nesi}. \footnote{We use the baseline results reported by \cite{van2023nesi} for the CNN and NeuPSL. We run A-NeSI on our machine to compare execution time.} The experiments were conducted on a machine equipped with an NVIDIA GTX 3070 with 12GB RAM.
For the perception neural network, we use the same architecture as A-NeSI. For the \godel logic, we implement the interpretation in the log-space to ensure numerical stability. For \gt, we apply the categorical $shift$ function described in Section \ref{sec:categorical}. In both cases, training is performed in two phases: we first optimize clauses independently to provide a denser gradient signal, then aggregate them into the full Sudoku formula. A comprehensive description of the architecture and evaluation settings is provided in the Supplementary Material.

\begin{table}[h]
\centering
\caption{Comparisons on the $9\times9$ Visual Sudoku task.}
\label{tab:sudoku_results}
\begin{tabular}{lcc}
\toprule
\textbf{Method} & \textbf{Accuracy (\%)} & \textbf{Avg. Time (min)} \\
\midrule
CNN (Perception) & $51.20 \pm 2.20$ & - \\
NeuPSL & $51.50 \pm 1.37$ & - \\
A-NeSI & $62.25 \pm 2.20$ & $20.3$ \\
\midrule
Gödel Logic (Det.) & $61.19 \pm 1.61$ & $20.5$ \\
\textbf{Gödel Trick (GT)} & $\mathbf{62.95 \pm 1.42}$ & $\mathbf{8.5}$ \\
\bottomrule
\end{tabular}
\end{table}

Results in Table \ref{tab:sudoku_results} show that both Gödel-based approaches reach accuracies statistically equivalent to A-NeSI. It is worth noting that, to correctly classify a valid Sudoku instance, the model must simultaneously predict all 81 images correctly.
As a consequence, even a $0.994$ per-digit accuracy limits the expected grid accuracy to $0.994^{81} \approx 0.61$. 

Notably, the deterministic \godel Logic is the slowest method. In contrast, \gt is more than twice as fast, demonstrating a significant computational advantage. This efficiency stems from the mutual exclusivity constraints: while \gt uses the extremely efficient $shift$ function to enforce this constraint, in \godel logic we can not utilize such strategy, requiring more expensive operations to maintain numerical stability while enforcing the constraint.

Overall, these results validate our theoretical framework and its applicability in combination with neural networks, demonstrating that \godel logic and its variants are viable tools for neurosymbolic domains.

\section{Conclusions and Future Work}
\label{sec:conclusion}

In this work, we challenged the view of \godel logic as a mere continuous relaxation by proving its homomorphism to classical logic semantics. Moreover, thanks to its gradient sparsity, we proved that \godel optimization formally instantiates a discrete local search for satisfiability.

To address the local optima inherent in this deterministic search, we introduced the \godel Trick (GT), a stochastic reparameterization designed for better exploration. Beyond its empirical success, GT establishes a formal theoretical bridge between fuzzy optimization and probabilistic inference, acting as a Monte Carlo estimator for WMC.

Future work will focus on extending \gt with optimizations inspired by the local search algorithms literature, for instance with Tabu Search, to further improve convergence and robustness.
Additionally, we plan to evaluate \gt on more complex NeSy tasks and experiment with alternative noise distributions beyond Logistic and Uniform to understand their effect on performance and generalization.

Another promising direction involves integrating GT with generative models, exploiting the connections with probabilistic inference (see Theorem~\ref{th:prob_inf}). By interpreting unperturbed truth values as negative energies in a Gibbs distribution, GT could be used to combine multiple Energy-Based Models (EBMs), providing a differentiable mechanism to enforce logical constraint satisfaction on generated samples.

Finally, we aim to investigate the integration of the \godel Trick into models capable of learning knowledge, such as DSL~\cite{daniele2023deep}, to assess its potential in end-to-end neurosymbolic learning frameworks.

\subsection{Limitations}
While gradient sparsity is the very reason \bpg mimics LSA, it may also reduce convergence speed compared to dense-gradient logic. Furthermore, \gt remains susceptible to cycles and local optima, lacking the global deductive capabilities of modern SAT solvers. Finally, \gt inherits common NeSy challenges, such as the independence assumption between propositions~\cite{vanindependence}, which can cause the optimization to collapse into a single deterministic solution in discriminative learning settings, thereby losing the ability to model uncertainty. Additionally, \gt remains susceptible to reasoning shortcuts \cite{marconato2024not,marconatobears}.

\section*{Acknowledgments} We would like to express our gratitude to Samy Badreddine for the insightful discussions and valuable feedback, which had a significant impact on this work. We also thank Samuel Cognolato and Davide Bizzaro for their helpful comments and suggestions.

\section*{AI Declaration} LLMs have been used during the preparation of this manuscript exclusively as an editing tool. Additionally, they have been used for creating the python code that generated the plots of Figure~\ref{fig:virtual_imp}(c), Figure~\ref{fig:virtual_imp}(d), and Figure~\ref{fig:GT}. Finally, they have been used for the python script that converts the results of SAT experiments into the latex code for Table~\ref{tab:methods_comparison}, and for the longer version of the table in the Supplementary Materials. All generated content have been checked by the authors.

\bibliographystyle{kr}
\bibliography{references.bib}

@article{selsam2018learning,
  title={Learning a SAT solver from single-bit supervision},
  author={Selsam, Daniel and Lamm, Matthew and B{\"u}nz, Benedikt and Liang, Percy and de Moura, Leonardo and Dill, David L},
  journal={arXiv preprint arXiv:1802.03685},
  year={2018}
}

@article{li2022nsnet,
  title={Nsnet: A general neural probabilistic framework for satisfiability problems},
  author={Li, Zhaoyu and Si, Xujie},
  journal={Advances in Neural Information Processing Systems},
  volume={35},
  pages={25573--25585},
  year={2022}
}

@inproceedings{wang2019satnet,
  title={SATNet: Bridging deep learning and logical reasoning using a differentiable satisfiability solver},
  author={Wang, Po-Wei and Donti, Priya L and Wilder, Bryan and Kolter, J Zico},
  booktitle={International conference on machine learning},
  pages={6545--6554},
  year={2019},
  organization={PMLR}
}

@article{gupta1991theory,
  title={Theory of T-norms and fuzzy inference methods},
  author={Gupta, Madan M and Qi, J11043360726},
  journal={Fuzzy sets and systems},
  volume={40},
  number={3},
  pages={431--450},
  year={1991},
  publisher={Elsevier}
}

@inproceedings{daniele2023deep,
  title={Deep symbolic learning: discovering symbols and rules from perceptions},
  author={Daniele, Alessandro and Campari, Tommaso and Malhotra, Sagar and Serafini, Luciano},
  booktitle={Proceedings of the Thirty-Second International Joint Conference on Artificial Intelligence},
  pages={3597--3605},
  year={2023}
}

@article{rumelhart1986learning,
  title={Learning representations by back-propagating errors},
  author={Rumelhart, David E and Hinton, Geoffrey E and Williams, Ronald J},
  journal={nature},
  volume={323},
  number={6088},
  pages={533--536},
  year={1986},
  publisher={Nature Publishing Group UK London}
}

@article{gumbel1954statistical,
  title={Statistical theory of extreme valuse and some practical applications},
  author={Gumbel, Emil Julius},
  journal={Nat. Bur. Standards Appl. Math. Ser. 33},
  year={1954}
}

@article{pryor2022neupsl,
  title={Neupsl: Neural probabilistic soft logic},
  author={Pryor, Connor and Dickens, Charles and Augustine, Eriq and Albalak, Alon and Wang, William and Getoor, Lise},
  journal={arXiv preprint arXiv:2205.14268},
  year={2022}
}

@article{van2022analyzing,
  title={Analyzing differentiable fuzzy logic operators},
  author={van Krieken, Emile and Acar, Erman and van Harmelen, Frank},
  journal={Artificial Intelligence},
  volume={302},
  pages={103602},
  year={2022},
  publisher={Elsevier}
}

@article{badreddine2022logic,
  title={Logic tensor networks},
  author={Badreddine, Samy and Garcez, Artur d'Avila and Serafini, Luciano and Spranger, Michael},
  journal={Artificial Intelligence},
  volume={303},
  pages={103649},
  year={2022},
  publisher={Elsevier}
}

@article{diligenti2017semantic,
  title={Semantic-based regularization for learning and inference},
  author={Diligenti, Michelangelo and Gori, Marco and Sacca, Claudio},
  journal={Artificial Intelligence},
  volume={244},
  pages={143--165},
  year={2017},
  publisher={Elsevier}
}

@article{MARRA2024104062,
  title = {From Statistical Relational to Neurosymbolic Artificial Intelligence: {{A}} Survey},
  author = {Marra, Giuseppe and Duman{\v c}i{\'c}, Sebastijan and Manhaeve, Robin and De Raedt, Luc},
  year = {2024},
  journal = {Artificial Intelligence},
  volume = {328},
  pages = {104062},
  issn = {0004-3702},
  doi = {10.1016/j.artint.2023.104062},
  abstract = {This survey explores the integration of learning and reasoning in two different fields of artificial intelligence: neurosymbolic and statistical relational artificial intelligence. Neurosymbolic artificial intelligence (NeSy) studies the integration of symbolic reasoning and neural networks, while statistical relational artificial intelligence (StarAI) focuses on integrating logic with probabilistic graphical models. This survey identifies seven shared dimensions between these two subfields of AI. These dimensions can be used to characterize different NeSy and StarAI systems. They are concerned with (1) the approach to logical inference, whether model or proof-based; (2) the syntax of the used logical theories; (3) the logical semantics of the systems and their extensions to facilitate learning; (4) the scope of learning, encompassing either parameter or structure learning; (5) the presence of symbolic and subsymbolic representations; (6) the degree to which systems capture the original logic, probabilistic, and neural paradigms; and (7) the classes of learning tasks the systems are applied to. By positioning various NeSy and StarAI systems along these dimensions and pointing out similarities and differences between them, this survey contributes fundamental concepts for understanding the integration of learning and reasoning.},
  keywords = {Learning and reasoning,Neurosymbolic AI,Probabilistic logics,Statistical relational AI}
}

@misc{feldstein2024mappingneurosymbolicailandscape,
      title={Mapping the Neuro-Symbolic AI Landscape by Architectures: A Handbook on Augmenting Deep Learning Through Symbolic Reasoning}, 
      author={Jonathan Feldstein and Paulius Dilkas and Vaishak Belle and Efthymia Tsamoura},
      year={2024},
      eprint={2410.22077},
      archivePrefix={arXiv},
      primaryClass={cs.AI},
      url={https://arxiv.org/abs/2410.22077}, 
}

@article{grespan2021evaluating,
  title={Evaluating relaxations of logic for neural networks: A comprehensive study},
  author={Grespan, Mattia Medina and Gupta, Ashim and Srikumar, Vivek},
  journal={arXiv preprint arXiv:2107.13646},
  year={2021}
}

@article{flinkow2024comparing,
  title={Comparing differentiable logics for learning with logical constraints},
  author={Flinkow, Thomas and Pearlmutter, Barak A and Monahan, Rosemary},
  journal={arXiv preprint arXiv:2407.03847},
  year={2024}
}

@inproceedings{slusarz2023logic,
  title={Logic of Differentiable Logics: Towards a Uniform Semantics of DL},
  author={Slusarz, Natalia and Komendantskaya, Ekaterina and Daggitt, Matthew L and Stewart, Robert and Stark, Kathrin},
  booktitle={Proceedings of 24th International Conference on Logic},
  volume={94},
  pages={473--493},
  year={2023}
}

@article{giunchiglia2024ccn+,
  title={CCN+: A neuro-symbolic framework for deep learning with requirements},
  author={Giunchiglia, Eleonora and Tatomir, Alex and Stoian, Mihaela C{\u{a}}t{\u{a}}lina and Lukasiewicz, Thomas},
  journal={International Journal of Approximate Reasoning},
  pages={109124},
  year={2024},
  publisher={Elsevier}
}

@inproceedings{daniele2019knowledge,
  title={Knowledge enhanced neural networks},
  author={Daniele, Alessandro and Serafini, Luciano},
  booktitle={PRICAI 2019: Trends in Artificial Intelligence: 16th Pacific Rim International Conference on Artificial Intelligence, Cuvu, Yanuca Island, Fiji, August 26--30, 2019, Proceedings, Part I 16},
  pages={542--554},
  year={2019},
  organization={Springer}
}

@article{daniele2023refining,
  title={Refining neural network predictions using background knowledge},
  author={Daniele, Alessandro and van Krieken, Emile and Serafini, Luciano and van Harmelen, Frank},
  journal={Machine Learning},
  volume={112},
  number={9},
  pages={3293--3331},
  year={2023},
  publisher={Springer}
}

@inproceedings{xu2018semantic,
  title={A semantic loss function for deep learning with symbolic knowledge},
  author={Xu, Jingyi and Zhang, Zilu and Friedman, Tal and Liang, Yitao and Broeck, Guy},
  booktitle={International conference on machine learning},
  pages={5502--5511},
  year={2018},
  organization={PMLR}
}

@article{manhaeve2018deepproblog,
  title={Deepproblog: Neural probabilistic logic programming},
  author={Manhaeve, Robin and Dumancic, Sebastijan and Kimmig, Angelika and Demeester, Thomas and De Raedt, Luc},
  journal={Advances in neural information processing systems},
  volume={31},
  year={2018}
}

@article{ahmed2022semantic,
  title={Semantic probabilistic layers for neuro-symbolic learning},
  author={Ahmed, Kareem and Teso, Stefano and Chang, Kai-Wei and Van den Broeck, Guy and Vergari, Antonio},
  journal={Advances in Neural Information Processing Systems},
  volume={35},
  pages={29944--29959},
  year={2022}
}

@article{chavira2008probabilistic,
  title={On probabilistic inference by weighted model counting},
  author={Chavira, Mark and Darwiche, Adnan},
  journal={Artificial Intelligence},
  volume={172},
  number={6-7},
  pages={772--799},
  year={2008},
  publisher={Elsevier}
}

@article{choi2020probabilistic,
  title={Probabilistic circuits: A unifying framework for tractable probabilistic models},
  author={Choi, Y and Vergari, Antonio and Van den Broeck, Guy},
  journal={UCLA. URL: http://starai. cs. ucla. edu/papers/ProbCirc20. pdf},
  pages={6},
  year={2020}
}

@inproceedings{kisa2014probabilistic,
  title={Probabilistic sentential decision diagrams},
  author={Kisa, Doga and Van den Broeck, Guy and Choi, Arthur and Darwiche, Adnan},
  booktitle={Fourteenth International Conference on the Principles of Knowledge Representation and Reasoning},
  year={2014}
}

@article{van2023nesi,
  title={A-nesi: A scalable approximate method for probabilistic neurosymbolic inference},
  author={van Krieken, Emile and Thanapalasingam, Thiviyan and Tomczak, Jakub and Van Harmelen, Frank and Ten Teije, Annette},
  journal={Advances in Neural Information Processing Systems},
  volume={36},
  pages={24586--24609},
  year={2023}
}

@inproceedings{NEURIPS2023_61202bb3,
  title = {Differentiable Sampling of Categorical Distributions Using the {{CatLog-derivative}} Trick},
  booktitle = {Advances in Neural Information Processing Systems},
  author = {De Smet, Lennert and Sansone, Emanuele and Zuidberg Dos Martires, Pedro},
  editor = {Oh, A. and Naumann, T. and Globerson, A. and Saenko, K. and Hardt, M. and Levine, S.},
  year = {2023},
  volume = {36},
  pages = {30416--30428},
  publisher = {Curran Associates, Inc.}
}

@article{marconato2024not,
  title={Not all neuro-symbolic concepts are created equal: Analysis and mitigation of reasoning shortcuts},
  author={Marconato, Emanuele and Teso, Stefano and Vergari, Antonio and Passerini, Andrea},
  journal={Advances in Neural Information Processing Systems},
  volume={36},
  year={2024}
}

@inproceedings{marconatobears,
  title={BEARS Make Neuro-Symbolic Models Aware of their Reasoning Shortcuts},
  author={Marconato, Emanuele and Bortolotti, Samuele and van Krieken, Emile and Vergari, Antonio and Passerini, Andrea and Teso, Stefano},
  booktitle={The 40th Conference on Uncertainty in Artificial Intelligence},
  year={2024}
}

@inproceedings{vanindependence,
  title={On the Independence Assumption in Neurosymbolic Learning},
  author={van Krieken, Emile and Minervini, Pasquale and Ponti, Edoardo and Vergari, Antonio},
  booktitle={Forty-first International Conference on Machine Learning},
  year={2024}
}

@article{valiant1979complexity,
  title={The complexity of enumeration and reliability problems},
  author={Valiant, Leslie G},
  journal={siam Journal on Computing},
  volume={8},
  number={3},
  pages={410--421},
  year={1979},
  publisher={SIAM}
}

@article{mohamed2020monte,
  title={Monte carlo gradient estimation in machine learning},
  author={Mohamed, Shakir and Rosca, Mihaela and Figurnov, Michael and Mnih, Andriy},
  journal={Journal of Machine Learning Research},
  volume={21},
  number={132},
  pages={1--62},
  year={2020}
}

@inproceedings{DBLP:journals/corr/KingmaW13,
  author       = {Diederik P. Kingma and
                  Max Welling},
  editor       = {Yoshua Bengio and
                  Yann LeCun},
  title        = {Auto-Encoding Variational Bayes},
  booktitle    = {2nd International Conference on Learning Representations, {ICLR} 2014,
                  Banff, AB, Canada, April 14-16, 2014, Conference Track Proceedings},
  year         = {2014},
  url          = {http://arxiv.org/abs/1312.6114},
  timestamp    = {Thu, 04 Apr 2019 13:20:07 +0200},
  biburl       = {https://dblp.org/rec/journals/corr/KingmaW13.bib},
  bibsource    = {dblp computer science bibliography, https://dblp.org}
}

@inproceedings{jang2022categorical,
  title={Categorical Reparameterization with Gumbel-Softmax},
  author={Jang, Eric and Gu, Shixiang and Poole, Ben},
  booktitle={International Conference on Learning Representations},
  year={2022}
}

@inproceedings{augustine2022visual,
  title={Visual sudoku puzzle classification: A suite of collective neuro-symbolic tasks},
  author={Augustine, Eriq and Pryor, Connor and Dickens, Charles and Pujara, Jay and Wang, William and Getoor, Lise},
  booktitle={International Workshop on Neural-Symbolic Learning and Reasoning (NeSy)},
  year={2022}
}

@article{hoos2000satlib,
  title={SATLIB: An online resource for research on SAT},
  author={Hoos, Holger H and St{\"u}tzle, Thomas},
  journal={Sat},
  volume={2000},
  pages={283--292},
  year={2000}
}

@article{moisil1935recherches,
  title={Recherches sur l’algebre de la logique},
  author={Moisil, Grigor Constantin},
  journal={Ann. Sci. Univ. Jassy},
  volume={22},
  number={3},
  pages={1--117},
  year={1935}
}

@article{kalman1958lattices,
  title={Lattices with involution},
  author={Kalman, John Arnold},
  journal={Transactions of the American Mathematical Society},
  volume={87},
  number={2},
  pages={485--491},
  year={1958}
}

@article{andreoni2025t,
  title={T-ILR: a neurosymbolic integration for LTLf},
  author={Andreoni, Riccardo and Buliga, Andrei and Daniele, Alessandro and Ghidini, Chiara and Montali, Marco and Ronzani, Massimiliano},
  journal={arXiv preprint arXiv:2508.15943},
  year={2025}
}

\end{document}